\newtheorem{theorem}{Theorem}[section]
\newtheorem{definition}[theorem]{Definition}
\newtheorem{lemma}[theorem]{Lemma}
\newtheorem{assumption}[theorem]{Assumption}
\newtheorem{remark}[theorem]{Remark}
\newcommand{\BEAS}{\begin{eqnarray*}}
\newcommand{\EEAS}{\end{eqnarray*}}
\newcommand{\BEA}{\begin{eqnarray}}
\newcommand{\EEA}{\end{eqnarray}}
\newcommand{\BEQ}{\begin{equation}}
\newcommand{\EEQ}{\end{equation}}
\newcommand{\BIT}{\begin{itemize}}
\newcommand{\EIT}{\end{itemize}}
\newcommand{\BNUM}{\begin{enumerate}}
\newcommand{\ENUM}{\end{enumerate}}
\newcommand{\BA}{\begin{array}}
\newcommand{\EA}{\end{array}}
\newcommand{\eg}{{\it e.g.}}
\newcommand{\reals}{{\mathbb R}}
\newcommand{\diag}{\mathop{\bf diag}}
\newcommand{\Real}{{\mathbb R}}
\newcommand{\st}{~~~\text{s.t.}~~~}
\newcommand\ellipsebyfoci[4]{
   \path[#1] let \p1=(#2), \p2=(#3), \p3=( $(\p1)!.5!(\p2)$ )
   in \pgfextra{
    \pgfmathsetmacro{\angle}{atan2(\y2-\y1,\x2-\x1)}
    \pgfmathsetmacro{\focal}{veclen(\x2-\x1,\y2-\y1)/2/1cm}
    \pgfmathsetmacro{\lentotcm}{\focal*2*#4}
    \pgfmathsetmacro{\axeone}{(\lentotcm - 2 * \focal)/2+\focal}
    \pgfmathsetmacro{\axetwo}{sqrt((\lentotcm/2)*(\lentotcm/2)-\focal*\focal}
   }
   (\p3) ellipse[x radius=\axeone cm,y radius=\axetwo cm, rotate=\angle];
}
\newcommand\sbullet[1][.5]{\mathbin{\vcenter{\hbox{\scalebox{#1}{$\bullet$}}}}}
\newcommand\Ical{{\mathcal I}}
\newcommand\Iint{\mathring{\mathcal I}}
\newcommand\Xcal{{\mathcal X}}
\runningtitle{Screening Data Points in Empirical Risk Minimization}
\begin{document}

\twocolumn[

\aistatstitle{Screening Data Points in Empirical Risk Minimization \\ via Ellipsoidal Regions and Safe Loss Functions}

\aistatsauthor{Grégoire Mialon \And Alexandre d'Aspremont \And Julien Mairal}

\aistatsaddress{ Inria\footnotemark[1]\textsuperscript{,}\footnotemark[2] \And CNRS, ENS\footnotemark[2] \And Inria\footnotemark[1]}]

\footnotetext[1]{Univ.   Grenoble  Alpes,  Inria,  CNRS,Grenoble  INP,  LJK,  38000  Grenoble, France. <firstname.lastname@inria.fr>.}
\footnotetext[2]{Département d’informatique de l’ENS, CNRS, Inria, PSL, 75005 Paris, France. <aspremon@ens.fr>.}

\begin{abstract}
We design simple screening tests to automatically discard data samples in empirical risk minimization without losing optimization guarantees. We derive loss functions that produce dual objectives with a sparse solution. We also show how to regularize convex losses to ensure such a dual sparsity-inducing property, and propose a general method to design screening tests for classification or regression based on ellipsoidal approximations of the optimal set. In addition to producing computational gains, our approach also allows us to compress a dataset into a subset of representative points.

\end{abstract}

\section{INTRODUCTION}
\label{sec:introduction}
Let us consider a collection of $n$ pairs $(a_i,b_i)_{i=1,\ldots,n}$, where each vector~$a_i$ in~$\Real^p$ describes a data point and~$b_i$ is its label.
For regression, $b_i$ is real-valued, and we address the convex optimization problem
\begin{equation}
   \min_{x \in \Real^p, t \in \Real^n} f(t) + \lambda R(x) \st   t = A x - b, \label{eq:regression}\tag{${\mathcal P}_1$}
\end{equation}
where $A = [a_1,\ldots,a_n]^\top$ in $\Real^{n \times p}$ carries the feature vectors, and $b=[b_1,\ldots,b_n]$ carries the labels. The function $f$ is a convex loss and measures the fit between data points and the model, and $R$ is a convex regularization function.
For classification, the scalars~$b_i$ are binary labels in $\{-1,+1\}$, and we consider instead of~\eqref{eq:regression} margin-based loss functions, where our problem becomes
\begin{equation}
   \min_{x \in \Real^p, t \in \Real^n} f(t) + \lambda R(x) \st   t=\diag(b)A x, \label{eq:classification}\tag{${\mathcal P}_2$}
\end{equation}
The above problems cover a wide variety of formulations such as
Lasso~\citep{tibshirani1996regression} and its
variants~\citep{zou2005regularization}, logistic regression, support vector
machines \citep{friedman2001elements}, and many more. When $R$ is the $\ell_1$-norm, the solution is encouraged to be
sparse~\citep{bach2012optimization}, which can be exploited to speed-up 
optimization procedures. 

A recent line of work has focused on screening tests that seek to automatically discard variables 
before running an optimization algorithm.
For example,~\cite{safe} 
derive a screening rule from Karush-Kuhn-Tucker conditions,
noting that if a dual optimal variable satisfies a given inequality
constraint, the corresponding primal optimal variable must be zero. 
Checking this condition on a set that is known to contain the
optimal dual variable ensures that the corresponding  primal variable can be safely removed.
This prunes
out irrelevant features {\em before} solving the problem. This is called a \textit{safe} rule if it discards variables that are guaranteed to be useless; but it is possible to relax the ``safety'' of the rules~\citep{tibshirani2012strong} without losing too much accuracy in practice. The seminal approach by~\cite{safe} has led to a series of works proposing refined tests~\citep{ellipsoids,wang2013lasso} or dynamic rules~\citep{safer_rules} for the Lasso, where screening is performed as the optimization algorithm proceeds, significantly speeding up convergence. Other papers have proposed screening rules for sparse logistic regression~\citep{reglog_screening} or other linear models.

Whereas the goal of these previous methods is to remove \emph{variables}, our
goal is to design screening tests for \emph{data points} in order to
remove observations that do not contribute to the final model.  The
problem is important when there is a large amount of ``trivial''
observations that are useless for learning. This typically occurs in {\em tracking or 
anomaly detection} applications, where a classical heuristic seeks to mine the data
to find difficult examples~\citep{felzenszwalb2009object}. A few of such screening tests for data points have been proposed in the
literature. Some are problem-specific (\textit{e.g.}~\cite{ogawa2014safe} for SVM), others are making strong assumptions on the objective. For
instance, the most general rule of~\cite{double_screening} for classification
requires strong convexity and the ability to compute a duality gap in closed
form. The goal of our paper is to provide a more generic approach for screening data
samples, both for regression and classification. Such screening tests may be designed for loss functions that induce
a sparse dual solution. We describe this class of loss functions and investigate a regularization mechanism that ensures that the loss enjoys such a property. Our contributions can be summarized as follows:
\begin{itemize}[leftmargin=*]
\item We revisit the Ellipsoid method~\citep{ellipsoids_survey} to design screening test for samples, when the objective is convex and its dual admits a sparse solution.
\item We propose a new regularization mechanism to design regression or classification losses that 
   induce sparsity in the dual. This allows us to recover existing loss functions and to discover new ones with sparsity-inducing properties in the dual.  
\item Originally designed for linear models, we extend our screening rules to kernel methods. Unlike the existing literature, our method also works for non strongly convex objectives.
\item We demonstrate the benefits of our screening rules in various numerical experiments on large-scale classification problems and regression\footnotemark[1]. 
\end{itemize}

\footnotetext[1]{Our code is available at \url{https://github.com/GregoireMialon/screening\_samples}.}
\section{PRELIMINARIES}
\label{sec:tools}
We now present the key concepts used in our paper.

\subsection{Fenchel Conjugacy}

\begin{definition}[Fenchel conjugate]
   Let $f: \Real^p \to \Real \cup \{-\infty,+\infty\}$ be an extended real-valued function. The Fenchel conjugate of $f$ is defined by
\begin{equation*}
    f^*(y) = \underset{t \in \mathbb{R}^p}{\text{max }} \langle t, y \rangle - f(t).
\end{equation*}
\end{definition}
The biconjugate of $f$ is naturally the conjugate of $f^*$ and is denoted by $f^{**}$. The Fenchel-Moreau theorem~\citep{Hiri96} states that if $f$ is proper, lower semi-continuous and convex, then it is equal to its biconjugate $f^{**}$. Finally, Fenchel-Young's inequality gives for all pair $(t,y)$
\begin{equation*}
    f(t) + f^*(y) \geq \langle t, y \rangle,
\end{equation*}
with an equality case iff $y \in \partial f(t)$.

Suppose now that for such a function~$f$, we add a convex term $\Omega$ to $f^*$ in the definition of the biconjugate. We get a modified biconjugate $f_{\mu}$, written
\begin{align*}
     f_{\mu}(t) & = \underset{y \in \mathbb{R}^p}{\text{max }} \langle y, t \rangle - f^*(y) - \mu \Omega (y) \\
     & = \underset{y \in \mathbb{R}^p}{\text{max}} \langle y, t \rangle + \underset{z \in \mathbb{R}^p}{\text{min}} \left\{- \langle z, y \rangle + f(z) \right\} - \mu \Omega (y).
\end{align*}
The inner objective function is continuous, concave in~$y$ and convex in $z$, such that we can switch min and max according to Von Neumann's minimax theorem to get
\begin{align*}
     f_{\mu}(t) & = \underset{z \in \mathbb{R}^p}{\text{min }} f(z) + \underset{y \in \mathbb{R}^p}{\text{max }} \left\{ \langle t - z, y \rangle - \mu \Omega (y) \right\}\\
     & = \underset{z \in \mathbb{R}^p}{\text{min }} f(z) + \mu \Omega^*\left(\frac{t - z}{\mu}\right).
\end{align*}

\begin{definition}[Infimum convolution]
$f_{\mu}$ is called the infimum convolution of $f$ and $\Omega^*$, which may be written as $f ~\square~ \Omega^*$. 
\end{definition}

Note that $f_{\mu}$ is convex as the minimum of a convex function in $(t, z)$. We recover the Moreau-Yosida smoothing~\citep{moreau,yosida} and  its generalization when $\Omega$ is respectively a quadratic term or a strongly-convex term~\citep{Nest03}.  

\subsection{Empirical Risk Minimization and Duality}\label{subsec:dual}

Let us consider the convex ERM problem
\begin{equation}
\label{eq:erm}
   \min_{x \in \Real^p} P(x) = \frac{1}{n} \sum_{i=1}^{n} f_i(a_i^\top x) + \lambda R(x),
\end{equation}
which covers both~(\ref{eq:regression}) and~(\ref{eq:classification}) by using the appropriate definition of function $f_i$.
We consider the dual problem (obtained from Lagrange duality)
\begin{equation}
\label{eq:dual}
    \max_{\nu \in \Real^n} D(\nu) = \frac{1}{n} \sum_{i=1}^n - f_i^*(\nu_i) - \lambda R^*\left(-\frac{A^T \nu}{\lambda n}\right).
\end{equation}
We always have $P(x) \geq D(\nu)$. Since there exists a pair $(x,t)$ such that $Ax=t$ (Slater's conditions), we have $P(x^\star) = D(\nu^\star)$ and $x^\star = -\frac{A^\top \nu^\star}{\lambda n}$ at the optimum.

\subsection{Safe Loss Functions and Sparsity in the Dual of ERM Formulations} A key feature of our losses is to encourage sparsity of dual solutions, which typically emerge from loss functions with a flat region. We call such functions ``safe losses'' since they will allow us to design safe screening tests.

\begin{definition}[Safe loss function]
\label{def:margin_loss}
Let $f: \Real \to \Real$ be a continuous convex loss function such that $\inf_{t \in \mathbb{R}} f(t) = 0$. We say that $f$ is a safe loss if there exists a non-singleton and non-empty interval $\mathcal{I} \subset \mathbb{R}$ such that
\begin{equation*}
    t \in \mathcal{I} \implies f(t) = 0.
\end{equation*} 
\end{definition}

\begin{lemma}[Dual sparsity]\label{lemma:margin}
\label{lemma:margin_sparsity}
   Consider the problem~(\ref{eq:erm}) where $R$ is a convex penalty. Denoting by $x^\star$ and $\nu^\star$ the optimal primal and dual variables respectively, we have for all $i =1,\ldots, n$,
   $$ \nu^\star_i \in \partial f_i(a_i^\top x^\star).$$
\end{lemma}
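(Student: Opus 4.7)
The plan is to derive the subdifferential inclusion from the equality case of the Fenchel--Young inequality, using the strong duality relation $P(x^\star) = D(\nu^\star)$ stated just above in Section~\ref{subsec:dual}.

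First, I would write out strong duality explicitly and move all conjugate terms to the same side:
\begin{equation*}
   \frac{1}{n}\sum_{i=1}^n \bigl[f_i(a_i^\top x^\star) + f_i^*(\nu_i^\star)\bigr] + \lambda \bigl[R(x^\star) + R^*\bigl(-\tfrac{A^\top \nu^\star}{\lambda n}\bigr)\bigr] = 0.
\end{equation*}
By Fenchel--Young applied termwise, each bracket is bounded below: $f_i(a_i^\top x^\star) + f_i^*(\nu_i^\star) \geq \nu_i^\star \, a_i^\top x^\star$, and $R(x^\star) + R^*(-A^\top \nu^\star/(\lambda n)) \geq -\tfrac{1}{\lambda n}\langle x^\star, A^\top \nu^\star\rangle$.

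Next, I would add these bounds up. The $f_i$ terms sum to $\tfrac{1}{n}\langle A^\top \nu^\star, x^\star\rangle$, while the $R$ term, after multiplying by $\lambda$, contributes exactly $-\tfrac{1}{n}\langle A^\top \nu^\star, x^\star\rangle$. The two cross terms cancel, so the total lower bound is $0$. Combined with the displayed equation, each individual Fenchel--Young inequality must therefore be saturated.

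Finally, invoking the equality case of Fenchel--Young recalled in Section~2.1 (equality iff the dual variable lies in the subdifferential at the primal point), saturation of the $i$-th inequality gives $\nu_i^\star \in \partial f_i(a_i^\top x^\star)$, which is the claim.

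I do not anticipate a real obstacle; the only care needed is the sign/scaling bookkeeping that ensures the $R$-term cross contribution exactly cancels the sum of $f_i$-term cross contributions, which is guaranteed by the optimality relation $x^\star = -A^\top \nu^\star/(\lambda n)$ that underlies strong duality here. Note that we do not even need this relation directly as a hypothesis, since it is used only implicitly through $P(x^\star) = D(\nu^\star)$; the cancellation is an algebraic identity.
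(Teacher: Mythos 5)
Your proposal is correct and is essentially the paper's own proof: both start from $P(x^\star)=D(\nu^\star)$, split the resulting zero into per-term Fenchel--Young gaps whose cross terms cancel (the paper does this by adding and subtracting $\langle x,-A^\top\nu/n\rangle$, you by summing the termwise lower bounds), conclude that a zero sum of nonnegative terms forces each gap to vanish, and invoke the Fenchel--Young equality case. Your closing remark that the cancellation is a purely algebraic identity, not requiring $x^\star=-A^\top\nu^\star/(\lambda n)$ as a separate hypothesis, is a correct and slightly cleaner observation than the paper makes explicit.
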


The proof can be found in Appendix~\ref{sec:add_proofs}.

\begin{remark}[Safe loss and dual sparsity]
A consequence of this lemma is that for both classification and
regression, the sparsity of the dual solution is related to loss functions that have ``flat'' regions---that is, such that $0 \in \partial f_i'(t)$. This is the case for safe loss functions defined above.
\end{remark}

The relation between flat losses and sparse dual solutions is classical, see~\cite{steinwart2004sparseness,blondel19}.
\section{SAFE RULES FOR SCREENING DATA POINTS}
\label{sec:safe_lasso}
In this section, we derive screening rules in the spirit of SAFE~\citep{safe} to select data points in regression or classification problems with safe losses. 

\subsection{Principle of SAFE Rules for Data Points}

We recall that our goal is to safely delete data points prior to optimization, that is, we want to train the model on a subset of the original dataset while still getting the same optimal solution as a model trained on the whole dataset. 
This amounts to identifying beforehand which dual variables are zero at the optimum. Indeed, as discussed in Section~\ref{subsec:dual}, the optimal primal variable $x^\star = - \frac{A^\top \nu^\star}{\lambda n}$ only relies on non-zero entries of $\nu^\star$. 
To that effect, we make the following assumption:
\begin{assumption}[Safe loss assumption]\label{assum:safe}
   We consider problem~(\ref{eq:erm}), where each $f_i$ is a safe loss function. Specifically, we assume that $f_i(a_i^\top x) = \phi( a_i^\top x - b_i)$ for regression, or $f_i(a_i^\top x) = \phi( b_i a_i^\top x)$ for classification, where $\phi$ satisfies Definition~\ref{def:margin_loss} on some interval $\Ical$. For simplicity, we assume that there exists $\mu > 0$ such that $\Ical = [-\mu,\mu]$ for regression losses and $\Ical = [\mu, +\infty)$ for classification, which covers most useful cases.
\end{assumption}
We may now state the basic safe rule for screening.
\begin{lemma}[SAFE rule]
   Under Assumption~\ref{assum:safe}, consider a subset $\Xcal$ containing the optimal solution~$x^\star$. 
   If, for a given data point $(a_i, b_i)$, $ a_i^\top x  - b_i \in \mathring{\mathcal{I}}$ for all $x$ in $\Xcal$, (resp. $b_i  a_i^\top x  \in \mathring{\mathcal{I}}$), where $\mathring{\mathcal{I}}$ is the interior of $\mathcal{I}$,
   then this data point can be discarded from the dataset.
\end{lemma}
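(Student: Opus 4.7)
The plan is to show that under the stated hypothesis the $i$-th dual optimum must vanish, and then to verify that discarding a sample whose dual variable is zero preserves the primal optimum $x^\star$. This together justifies removing the sample before optimization.

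First I would specialize the hypothesis to $x = x^\star$, which is legitimate because $x^\star \in \mathcal{X}$. In the regression case this yields $a_i^\top x^\star - b_i \in \mathring{\mathcal{I}}$, and in the classification case $b_i a_i^\top x^\star \in \mathring{\mathcal{I}}$. Since $\phi$ is identically zero on $\mathcal{I}$ by Assumption~\ref{assum:safe}, it is in fact zero on an open neighborhood of the point under consideration, hence differentiable there with derivative zero. Expressing $f_i$ as $t \mapsto \phi(t - b_i)$ or $t \mapsto \phi(b_i t)$ (using $b_i \in \{-1,+1\}$) and invoking the chain rule gives $\partial f_i(a_i^\top x^\star) = \{0\}$. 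Lemma~\ref{lemma:margin_sparsity} then forces $\nu^\star_i = 0$.

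Second I would check that $x^\star$ remains optimal for the reduced problem $\min_x \tfrac{1}{n}\sum_{j \neq i} f_j(a_j^\top x) + \lambda R(x)$. Let $\tilde{\nu}$ denote $\nu^\star$ with its $i$-th entry removed. Since $\nu^\star_i = 0$, we have $A^\top \nu^\star = \sum_{j \neq i} a_j \tilde{\nu}_j$, so the primal-dual stationarity $x^\star = -A^\top\nu^\star/(\lambda n)$ coincides with the stationarity condition at $(x^\star,\tilde{\nu})$ for the reduced problem. The per-sample subgradient conditions $\tilde{\nu}_j \in \partial f_j(a_j^\top x^\star)$ are inherited from the original KKT conditions for indices $j \neq i$. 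Thus $(x^\star,\tilde{\nu})$ is a primal-dual optimal pair for the reduced problem and training on the retained samples recovers the same $x^\star$.

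The one subtle step is the subgradient calculation: one must argue that local flatness of $\phi$ forces $\partial \phi = \{0\}$, not merely $0 \in \partial \phi$, so that Lemma~\ref{lemma:margin_sparsity} pins $\nu^\star_i$ down to zero rather than just containing zero among possibly many subgradients. This is exactly why the hypothesis uses the \emph{interior} $\mathring{\mathcal{I}}$: on the boundary of $\mathcal{I}$, the convex function $\phi$ may begin to increase and its subdifferential could be a nontrivial interval containing zero. Apart from this point, the argument is standard KKT bookkeeping and I do not anticipate further obstacles.
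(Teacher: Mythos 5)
Your proof is correct and follows essentially the same route as the paper's: the hypothesis at $x = x^\star \in \Xcal$ places the argument of $\phi$ in the interior of $\Ical$, where local flatness forces differentiability with $\partial f_i(a_i^\top x^\star) = \{0\}$, and Lemma~\ref{lemma:margin_sparsity} then gives $\nu_i^\star = 0$. Your second step (verifying via the KKT system that the reduced problem retains $x^\star$ as its optimum) and your remark on why the \emph{interior} of $\Ical$ is needed are both sound; the paper leaves these points to the surrounding discussion rather than the proof itself.
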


\begin{proof}
   From the definition of safe loss functions, $f_i$ is differentiable at $a_i^\top x^\star$ with $\nu_i^\star = f_i'(a_i^\top x) = 0$.
\end{proof}

We see now how the safe screening rule can be interpreted in terms of discrepancy between the model prediction $a_i^\top x$ and the true label $b_i$. If, for a set $\mathcal{X}$ containing the optimal solution $x^*$ and a given data point $(a_i, b_i)$, the prediction always lies in~$\Iint$, then the data point can be discarded from the dataset. The data point screening procedure therefore consists in \textit{maximizing linear forms}, $a_i^\top x - b_i$ and $-a_i^\top x + b_i$ in regression (resp. minimizing $b_i a_i^\top x$ in classification), over a set $\mathcal{X}$ containing $x^*$ and check whether they are lower (resp. greater) than the threshold $\mu$. The smaller $\mathcal{X}$, the lower the maximum (resp. the higher the minimum) hence the more data points we can hope to safely delete. Finding a good test region~$\Xcal$ is critical however. We show how to do this in the next section.

%
\subsection{Building the Test Region \texorpdfstring{$\mathcal{X}$}{X}}

Screening rules aim at sparing computing resources, testing a data point should therefore be easy. As in \cite{safe} for screening variables, if $\mathcal{X}$ is an ellipsoid, the optimization problem detailed above admits a closed-form solution. 
Furthermore, it is possible to get a smaller set $\mathcal{X}$ by adding a first order optimality condition with a subgradient $g$ of the objective evaluated in the center $z$ of this ellipsoid. This linear constraint cuts the final ellipsoid roughly in half thus reducing its volume.

\begin{lemma}[Closed-form screening test]\label{lemma:test}
    Consider the optimization problem
    \BEQ
    \BA{ll}
    \mbox{maximize} & a_i^\top x - b_i\\
    \mbox{subject to} &  (x - z)^T E^{-1} (x - z) \leq 1 \\
    & g^T(x - z) \leq 0
    \EA\EEQ
    in the variable $x$ in $\mathbb{R}^p$ with $E$ defining an ellipsoid with center $z$ and $g$ is in $\mathbb{R}^p$. Then the maximum is
    \begin{equation*}
    \begin{cases}
    a_i^\top z + (a_i^\top E a_i)^{\frac{1}{2}} - b_i \text{ if } g^TE a_i < 0 \\
    a_i^\top \left( z + \frac{1}{2 \gamma} E ( a_i - \nu g ) \right) - b_i \text{ otherwise},
    \end{cases}
    \end{equation*}
    with $ \nu = \frac{g^T E a_i}{g^T E g}$ and $\gamma = \left(\frac{1}{2} (a_i - \nu g)^\top E(a_i - \nu g)\right)^{\frac{1}{2}}$.
    \label{test}
\end{lemma}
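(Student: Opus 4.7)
The plan is to reduce the problem to a standard constrained maximization of a linear form over an ellipsoid intersected with a half-space, then use KKT conditions after a change of variables.

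First I would make the substitution $u = x - z$, so the problem becomes
\begin{equation*}
\max_{u \in \Real^p} \ a_i^\top u + (a_i^\top z - b_i) \quad \text{s.t.} \quad u^\top E^{-1} u \leq 1, \ g^\top u \leq 0,
\end{equation*}
where the affine constant $a_i^\top z - b_i$ can be carried through to the end. The maximum over the ellipsoid alone is classical: by Cauchy-Schwarz in the inner product induced by $E^{-1}$, it is attained at $u^\star = E a_i / (a_i^\top E a_i)^{1/2}$ with value $(a_i^\top E a_i)^{1/2}$. This gives the first branch of the formula, and the criterion $g^\top E a_i < 0$ simply says that this ellipsoidal optimizer already satisfies the half-space constraint $g^\top u^\star \leq 0$ and hence remains feasible.

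When $g^\top E a_i \geq 0$, the half-space constraint must be active at the optimum, so I would write the Lagrangian
\begin{equation*}
L(u,\gamma,\nu) = a_i^\top u - \gamma\bigl(u^\top E^{-1} u - 1\bigr) - \nu \, g^\top u,
\end{equation*}
with $\gamma \geq 0$ and $\nu \geq 0$. Stationarity gives $a_i - 2\gamma E^{-1} u - \nu g = 0$, hence
\begin{equation*}
u = \frac{1}{2\gamma} E(a_i - \nu g).
\end{equation*}
Plugging this into the active equality $g^\top u = 0$ eliminates $\gamma$ and yields $\nu = (g^\top E a_i)/(g^\top E g)$, which is nonnegative precisely in the current case. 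Enforcing the ellipsoidal equality $u^\top E^{-1} u = 1$ then gives $\gamma = \tfrac{1}{2}\bigl((a_i-\nu g)^\top E (a_i-\nu g)\bigr)^{1/2}$, matching the expressions in the statement. Substituting $u$ back into $a_i^\top u + a_i^\top z - b_i$ produces the second branch.

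The main obstacle is really a bookkeeping one: verifying that the case split $g^\top E a_i \lessgtr 0$ is consistent with the KKT sign conditions (in particular that $\nu \geq 0$ in the active case and $\nu = 0$ with a feasible ellipsoidal optimizer in the inactive case), and checking that Slater's condition holds so that strong duality applies and the KKT conditions are sufficient. Since $z$ lies strictly inside the ellipsoid and satisfies $g^\top 0 = 0$, the interior of the feasible region is nonempty along any direction with $g^\top d < 0$, so Slater holds and the KKT solution above is the global maximizer. The edge case $g^\top E a_i = 0$ is covered by either branch (they coincide since $\nu = 0$), so the formula is well-defined throughout.
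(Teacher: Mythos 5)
Your proof is correct and follows essentially the same route as the paper's: a Lagrangian/KKT argument with a case split on the sign of $g^\top E a_i$, with the half-space constraint inactive in the first branch and active in the second. Your placement of the factor $\tfrac{1}{2}$ outside the square root in the ellipsoid multiplier matches the paper's own derivation (the lemma statement writes it inside the root, which appears to be a typo), and your explicit check of Slater's condition and the KKT sign conditions is a welcome bit of extra care that the paper's proof glosses over.
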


The proof can be found in Appendix~\ref{sec:add_proofs} and it is easy to modify it for minimizing $b_i a_i^\top x$. We can obtain both $E$ and $z$ by using a few steps of \textit{the ellipsoid method} \citep{Nemi79,ellipsoids_survey}. This first-order optimization method starts from an initial ellipsoid containing the solution~$x^*$ to a given convex problem (here~\ref{eq:erm}) . It iteratively computes a subgradient in the center of the current ellipsoid, selects the half-ellipsoid containing $x^*$, and computes the ellipsoid with minimal volume containing the previous half-ellipsoid before starting all over again. Such a method, presented in Algorithm~\ref{algo:ell_method}, performs closed-form updates of the ellipsoid. It requires $O(p^2\log(\nicefrac{RL}{\epsilon}))$ iterations for a precision $\epsilon$ starting from a ball of radius $R$ with the Lipschitz bound $L$ on the loss, thus making it impractical for accurately solving high-dimensional problems. Finally, the ellipsoid update formula was also used to screen primal variables for the Lasso problem
\citep{ellipsoids}, although not iterating over ellipsoids in order to get
smaller volumes.

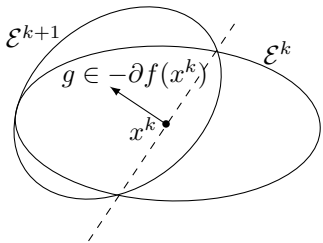
\begin{figure}
    \centering
    \begin{tikzpicture}[scale=0.56]
        \coordinate (a) at (-1.24,1.24);
        \coordinate (b) at (5,1);
        \ellipsebyfoci{draw}{a}{b}{1.16}
        \coordinate (c) at (-0.7661771140981, 0.4732063199958);
        \coordinate (d) at (2.1461475766973, 2.7417539738785);
        \ellipsebyfoci{draw}{c}{d}{1.47}
        \draw[dashed] (0,-1.67) -- (3.5,3.54);
        \draw[latex-] (0.5,2) --  (1.85,1.1);
        \draw (1.1, 2.3) node[scale=1] {$g \in -\partial f(x^k)$};
        \draw (1.3, 1) node[scale=1] {$x^k$};
        \draw (4.5, 2.8) node[scale=1] {$\mathcal{E}^k$};
        \draw (-1.3, 3.2) node[scale=1] {$\mathcal{E}^{k+1}$};
        \node at (1.84,1.11) [circle,fill,inner sep=1pt]{};
    \end{tikzpicture}
    \caption{One step of the ellipsoid method.}
    \label{fig:ellipsoid_method}
\end{figure}


\begin{algorithm}
\caption{Building ellipsoidal test regions}
\label{algo:ell_method}
\begin{algorithmic}[1]
\State \textbf{initialization:} Given $\mathcal{E}^0(x_0, E_0)$ containing $x^*$;
\While{$k < nb_{\text{steps}}$}
   \State $\sbullet$ Compute a subgradient $g$ of~(\ref{eq:erm}) in $x_k$;
  \State $\sbullet$ $\Tilde{g} \gets g / \sqrt{g^T E_k g}$;
  \State $\sbullet$ $x_{k+1} \gets x_{k} - \frac{1}{p+1} E_k \Tilde{g}$;
  \State $\sbullet$ $E_{k+1} \gets \frac{p^2}{p^2 - 1}( E_k - \frac{2}{p + 1} E_k \Tilde{g} \Tilde{g}^T E_k)$;
\EndWhile
\State For regression problems:
\For{each sample $a_i \text{ in } A$}
    \If{${\text{max}} |a_i^\top x - b_i| \leq \mu \text{ for } x \in \mathcal{E}^{nb_{\text{steps}}}$}
    \State Discard $a_i$ from $A$.
    \EndIf
\EndFor
\State For classification, replace condition $|a_i^\top x - b_i| \leq \mu$ by $b_i a_i^\top x \geq \mu$ in the above expression.
\end{algorithmic}
\end{algorithm}

\vspace*{-0.4cm}

\paragraph{Initialization.}
The algorithm requires an initial ellipsoid $\mathcal{E}^0(x_0, E_0)$ that contains the solution. This is
typically achieved by defining the center $x_0$ as an approximate solution of
the problem, which can be obtained in various ways. For instance, one may run a few
steps of a solver on the whole dataset, or one may consider the solution obtained previously
for a different regularization parameter when computing a regularization path, or the solution obtained
for slightly different data, \eg, for tracking applications where an optimization problem has to be solved at every time step $t$, with slight modifications from time $t-1$.

Once the center~$x_0$ is defined, there are many cases where the initial ellipsoid can
be safely assumed to be a sphere. For instance, if the objective---let us call it~$F$---is $\kappa$-strongly convex,
we have the basic inequality $\frac{\kappa}{2}\|x_0-x^\star\|^2 \leq F(x_0)-F^\star$, which can often be 
upper-bounded by several quantities, \eg, a duality gap~\citep{double_screening} or simply $F(x_0)$ if $F$ is non-negative as in typical ERM problems.
Otherwise, other strategies can be used depending on the problem at hand. If the problem is not strongly convex but constrained (\textit{e.g.} often a norm constraint in ERM problems), the initialization is also natural (\textit{e.g.}, a spere containing the constraint set). We will see below that one of the most successful applications of screening methods is for computing regularization paths. Given that regularization path for penalized and constrained problems coincide (up to minor details), computing the path for a penalized objective amounts to computing it for a constrained objective, whose ellipsoid initialization is safe as explained above. Even though we believe that those cases cover many (or most) problems of interest, it is also reasonable to believe that guessing the order of magnitude of the solution is feasible with simple heuristics, which is what we do for $\ell_1$-safe logistic regression. Then, it is possible to check \textit{a posteriori} that screening was safe and that indeed, the initial ellipsoid contained the solution.

\vspace*{-0.1cm}
\paragraph{Efficient implementation.}
 Since each update of the ellipsoid matrix $E$ is rank one, it is possible to parametrize $E_k$ at step $k$ as
\begin{equation*}
    E_{k} = s_k \text{I} - L_kD_kL_k^T,
\end{equation*} 
with $I$ the identity matrix, $L_k$ is in $\mathbb{R}^{p \times k}$ and $D_k$ in $\mathbb{R}^{k \times k}$ is a diagonal matrix. Hence, we only have to update $D$ and $L$ while the algorithm proceeds. 

\vspace*{-0.1cm}
\paragraph{Complexity of our screening rules.}
For each step of Algorithm~\ref{algo:ell_method}, we compute a subgradient $g$ in $O(np)$ operations. The ellipsoids are modified using rank one updates that can be stored. As a consequence, the computations at this stage are dominated by the computation of $Eg$, which is $O(pk)$. As a result, $k$ steps cost $O(k^2p + npk)$. Once we have the test set $\mathcal{X}$, we have to compute the closed forms from Lemma \ref{test} for each data point. This computation is dominated by the matrix-vector multiplications with $E$, which cost $O(kp)$ using the structure of $E$. Hence, testing the whole dataset costs $O(npk)$.
Since we typically have $n \gg k$, the cost of the overall screening procedure is therefore $O(n p k)$. In constrast, solving the ERM problem without screening would cost $O(n p T)$ where $T$ is the number of passes over the data, with $T \gg k$. With screening, the complexity becomes $O(n s T + np k)$, where $s$ is the number of data points accepted by the screening procedure. 

\subsection{Extension to Kernel Methods}

It is relatively easy to adapt our safe rules to kernel methods. Consider for example (\ref{eq:regression}), where $A$ has been replaced by $\phi(A) = [\phi(a_1),\ldots,\phi(a_n)]^\top$ in $\mathcal{H}^{n}$, with $\mathcal{H}$ a RKHS and $\phi$ its mapping function $\Real^p \rightarrow \mathcal{H}$. The prediction function $x \colon \Real^p \rightarrow \Real$ lives in the RKHS, thus it can be written $x(a) = \langle x, \phi(a) \rangle$, $\forall a \in \Real^p$. In the setting of an ERM strictly increasing with respect to the RKHS norm and each sample loss, the representer theorem ensures $x(a) = \sum_{i=1}^n \alpha_i K(a_i,a)$ with $\alpha_i \in \Real$ and $K$ the kernel associated to $\mathcal{H}$. If we consider the squared RKHS norm as the regularizer, which is typically the case, the problem becomes:
\begin{equation}
   \min_{\alpha \in \Real^n, t \in \Real^n} f(t) + \lambda \sum_{i,j = 1}^n \alpha_i \alpha_j K(a_i, a_j) \st   t = \mathbf{K}\alpha - b, \label{eq:kernelized_regression}
\end{equation}
with $\mathbf{K}$ the Gram matrix. The constraint is linear in $\alpha$ (thus satisfying to Lemma~\ref{lemma:reg}) while yielding non-linear prediction functions. The screening test becomes maximizing the linear forms $[ \mathbf{K}]_i\alpha - b_i$ and $- [\mathbf{K}]_i \alpha + b_i$ over an ellipsoid $\mathcal{X}$ containing $\alpha^*$. When the problem is convex (it depends on $K$), $\mathcal{X}$ can still be found using the ellipsoid method.

We now have an algorithm for selecting data points in regression or classification problems with linear or kernel models. As detailed above, the rules require a sparse dual, which is not the case in general except in particular instances such as support vector machines. We now explain how to induce sparsity in the dual. 

\section{CONSTRUCTING SAFE LOSSES}
\label{sec:theory}
In this section, we introduce a way to induce sparsity in the dual of empirical risk minimization problems.


\subsection{Inducing Sparsity in the Dual of ERM}

When the ERM problem does not admit a sparse dual solution, safe screening is not possible. To fix this issue, consider the ERM problem~(\ref{eq:regression}) and replace $f$ by $f_\mu$ defined in Section~\ref{sec:tools}:
\begin{equation}
   \min_{x \in \Real^p, t \in \Real^n} f_\mu(t) + \lambda R(x) \st   t = A x - b, \label{eq:regression_mod}\tag{${\mathcal P}'_1$}
\end{equation}
%
We have the following result connecting the dual of \eqref{eq:regression} with that of \eqref{eq:regression_mod}.

\begin{lemma}[Regularized dual for regression]
    The dual of \eqref{eq:regression_mod} is
\BEQ \label{dual_formula}
\max_{\nu \in \Real^n} - \langle b, \nu \rangle - f^*(\nu) - \lambda R^*\left(-\frac{A^T \nu}{\lambda}\right) - \mu \Omega(\nu),
\EEQ
and the dual of \eqref{eq:regression} is obtained by setting $\mu = 0$.
\label{lemma:reg}
\end{lemma}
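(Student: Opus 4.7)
My plan is to derive the dual by standard Lagrangian duality, with the key observation that the Fenchel conjugate of $f_\mu$ is $f^{*} + \mu \Omega$, which follows directly from the preliminaries.

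First, I would note that from the derivation in Section~\ref{sec:tools}, we have the representation
\[
f_\mu(t) = \max_{y \in \Real^n}\; \langle y, t \rangle - f^*(y) - \mu\Omega(y),
\]
so by inspection $f_\mu^{*}(y) = f^{*}(y) + \mu\Omega(y)$. (Assuming $f^{*}+\mu\Omega$ is proper, convex, lower semi-continuous, which is the setting of the paper, we may equivalently invoke Fenchel--Moreau.) This identity will do all the work in the computation below.

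Next, I would introduce the Lagrangian of~\eqref{eq:regression_mod} associated with the linear constraint $t = Ax - b$,
\[
L(x, t, \nu) = f_\mu(t) + \lambda R(x) + \langle \nu,\; Ax - b - t \rangle,
\]
and form the dual function $g(\nu) = \inf_{x,t} L(x, t, \nu)$. The minimization separates: the $t$-part gives
\[
\inf_t\; f_\mu(t) - \langle \nu, t \rangle \;=\; -f_\mu^{*}(\nu) \;=\; -f^{*}(\nu) - \mu\Omega(\nu),
\]
using the identity above, while the $x$-part gives
\[
\inf_x\; \lambda R(x) + \langle A^\top\nu,\, x\rangle \;=\; -\lambda R^{*}\!\left(-\tfrac{A^\top\nu}{\lambda}\right),
\]
by definition of the Fenchel conjugate of $R$ (scaling by $\lambda$). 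Adding the constant $-\langle b, \nu\rangle$ from the constraint yields exactly
\[
g(\nu) = -\langle b,\nu\rangle - f^{*}(\nu) - \lambda R^{*}\!\left(-\tfrac{A^\top\nu}{\lambda}\right) - \mu\Omega(\nu),
\]
so the dual is $\max_\nu g(\nu)$ as claimed. Setting $\mu = 0$ collapses $f_\mu$ to $f$ (and drops the $\mu\Omega(\nu)$ term), recovering the dual of~\eqref{eq:regression}.

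The only subtle step is the conjugate identity $f_\mu^{*} = f^{*} + \mu\Omega$; everything else is routine Lagrangian calculus. Strong duality (needed to write the problem as $\max_\nu g(\nu)$ with equality to the primal) follows from Slater's condition, exactly as for~\eqref{eq:regression} in Section~\ref{subsec:dual}, since the constraint is linear and the primal is finite-valued for any feasible $(x,t)$. I would state this briefly rather than belabor it.
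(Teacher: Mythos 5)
Your proposal is correct and follows essentially the same route as the paper: both proofs hinge on the identity $f_\mu^* = f^* + \mu\Omega$ (justified via Fenchel--Moreau under the standing properness/convexity/lower semicontinuity assumptions) and then apply standard Lagrangian duality to the linear constraint $t = Ax - b$. The only cosmetic difference is that the paper stacks $(t,x)$ into a single variable and invokes the generic dual formula for linearly constrained problems, whereas you write out the Lagrangian and separate the minimization over $t$ and $x$ directly; the bookkeeping of signs is equivalent.
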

The proof can be found in Appendix~\ref{sec:add_proofs}. We remark that is possible, in many cases,
to induce sparsity in the dual if $\Omega$ is the $\ell_1$-norm, or another
sparsity-inducing penalty. This is notably true if the unregularized dual is smooth with bounded gradients. In such a case, it is possible to show that the optimal dual solution would be $\nu^\star=0$ as soon as $\mu$ is large enough~\citep{bach2012optimization}.

We consider now the classification problem~(\ref{eq:classification}) and show that the previous remarks about sparsity-inducing regularization for the dual of regression problems
also hold in this new context.
\begin{lemma}[Regularized dual for classification]
Consider now the modified classification problem
\begin{equation}
   \min_{x \in \Real^p, t \in \Real^n} f_\mu(t) + \lambda R(x) \st   t=\diag(b)A x. \label{p_classif}\tag{${\mathcal P}_2'$}
\end{equation}
%
The dual of \ref{p_classif} is 
\BEQ \label{dual_formula_classif}
\max_{\nu \in \Real^n} - f^*(-\nu) - \lambda R^*\left(\frac{A^T\diag(b)\nu}{\lambda}\right) - \mu \Omega(-\nu).
\EEQ
\end{lemma}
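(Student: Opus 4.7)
The plan is to mirror the derivation of Lemma~\ref{lemma:reg} with the constraint $t = \diag(b)Ax$ in place of $t = Ax - b$. Concretely, I would introduce a Lagrange multiplier $\nu \in \Real^n$ for this linear equality constraint and form the Lagrangian $L(x,t,\nu) = f_\mu(t) + \lambda R(x) + \nu^\top(t - \diag(b)Ax)$. Since $f_\mu$ and $R$ are convex and the constraint is affine, Slater's condition holds and strong duality applies, so the dual objective is $g(\nu) = \inf_{x,t} L(x,t,\nu)$ and it remains to compute this explicitly.

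Next I would separate the inner infimum into independent problems over $t$ and $x$. The $t$-minimization yields $\inf_t \{f_\mu(t) + \nu^\top t\} = -f_\mu^*(-\nu)$ directly from the definition of the Fenchel conjugate. The $x$-minimization, setting $u := A^\top\diag(b)\nu$, yields $\inf_x \{\lambda R(x) - u^\top x\} = -(\lambda R)^*(u) = -\lambda R^*(u/\lambda)$ via the standard rescaling identity for Fenchel conjugates.

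The key step that produces the $\mu\Omega(-\nu)$ term is the identity $f_\mu^* = f^* + \mu\Omega$. By construction in Section~\ref{sec:tools}, $f_\mu$ is the Fenchel conjugate of $f^* + \mu\Omega$; since $f^* + \mu\Omega$ is proper, convex, and lower semi-continuous, Fenchel--Moreau gives $f_\mu^* = (f^*+\mu\Omega)^{**} = f^* + \mu\Omega$. Evaluating at $-\nu$ and assembling the pieces yields
\[ g(\nu) = -f^*(-\nu) - \mu\Omega(-\nu) - \lambda R^*\!\left(\frac{A^\top\diag(b)\nu}{\lambda}\right), \]
which is the claimed dual. There is no real obstacle here beyond bookkeeping the sign convention on $\nu$ (which is exactly what pushes the $-\nu$ inside $f^*$ and $\Omega$ while leaving $R^*$ with $+\nu$); setting $\mu = 0$ recovers the dual of the unregularized classification problem~\eqref{eq:classification}, providing a consistency check.
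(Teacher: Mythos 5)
Your proposal is correct and follows essentially the same route as the paper: the paper also derives the dual via the Fenchel conjugate of the objective under the linear constraint $\tilde{A}\tilde{x}=0$ with $\tilde{A}=(\mathrm{Id},-\diag(b)A)$ (mirroring the regression case), and the decisive step in both arguments is the identity $f_\mu^* = f^* + \mu\Omega$ obtained from Fenchel--Moreau. Your version merely unrolls the Lagrangian computation that the paper delegates to the standard formula for duals of linearly constrained problems; the separation into the $t$- and $x$-infima and the resulting signs all check out.
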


\begin{proof}
We proceed as above with a linear constraint $\Tilde{A}\Tilde{x} = 0$ and $\Tilde{A} = (Id , - \diag(b)A)$.
\end{proof}

Note that the formula directly provides the dual of regression and classification ERM problems with a linear model such as the Lasso and SVM. 

\subsection{Link Between the Original and Regularized Problems} 
The following results should be understood as an indication that $f$ and $f_{\mu}$ are similar objectives.

 \begin{lemma}[Smoothness of $f_{\mu}$]
     If $f^* + \Omega$ is strongly convex, then $f_{\mu}$ is smooth. 
 \label{lemma:smoothness_f}
 \end{lemma}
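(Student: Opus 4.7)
The plan is to recognize $f_\mu$ as a Fenchel conjugate and then invoke the classical duality between strong convexity and smoothness. From the derivation of $f_\mu$ in Section~\ref{sec:tools}, we have
\[
f_\mu(t) = \max_{y \in \mathbb{R}^p}\; \langle y,t\rangle - \bigl(f^*(y) + \mu\,\Omega(y)\bigr) = g_\mu^*(t),
\]
where $g_\mu := f^* + \mu\,\Omega$. So the statement will reduce to showing that the conjugate of a strongly convex function is smooth, applied to $g_\mu$.

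The first step is to upgrade the hypothesis "$f^* + \Omega$ is $\kappa$-strongly convex" into strong convexity of $g_\mu$ itself for an arbitrary $\mu > 0$, since $\Omega$ enters with coefficient $\mu$ rather than $1$. I would split into two cases. If $\mu \geq 1$, I write $g_\mu = (f^* + \Omega) + (\mu-1)\,\Omega$; the second summand is convex, so $g_\mu$ inherits $\kappa$-strong convexity. If $0 < \mu < 1$, I write $g_\mu = \mu(f^* + \Omega) + (1-\mu)\,f^*$; the first summand is $\mu\kappa$-strongly convex and the second is convex, so $g_\mu$ is $\mu\kappa$-strongly convex. In both cases $g_\mu$ is $\kappa\min(\mu,1)$-strongly convex.

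The second step is to invoke the classical result (see e.g.\ \cite{Hiri96}) that if $g$ is proper, lower semi-continuous, and $\sigma$-strongly convex, then $g^*$ is finite everywhere, differentiable, and $\nabla g^*$ is $(1/\sigma)$-Lipschitz. Properness and lower semi-continuity of $g_\mu$ follow from the corresponding properties of $f^*$ (automatic from $f$ being proper, convex, l.s.c.\ via Fenchel--Moreau) and the standing assumption that $\Omega$ is a reasonable convex penalty. Applying this to $g_\mu$ yields smoothness of $f_\mu = g_\mu^*$ with Lipschitz constant $1/(\kappa\min(\mu,1))$, completing the proof.

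The main obstacle is really just bookkeeping the strong-convexity constant across the $\mu \geq 1$ vs.\ $\mu < 1$ split; the core of the argument is the conjugate-of-strongly-convex lemma, which is entirely standard.
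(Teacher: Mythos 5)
Your proof is correct and follows essentially the same route as the paper: identify $f_\mu = (f^* + \mu\Omega)^*$ and invoke the standard fact that the conjugate of a closed, proper, strongly convex function is smooth. Your case split on whether $\mu \geq 1$ or $0 < \mu < 1$, used to transfer strong convexity from $f^* + \Omega$ to $f^* + \mu\Omega$ with explicit constant $\kappa\min(\mu,1)$, is a detail the paper's one-line proof leaves implicit, and you handle it correctly.
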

 
 \begin{proof}
    The lemma follows directly from the fact that $f_{\mu} = (f^* + \mu \Omega)^*$ (see the proof of Lemma~\ref{lemma:reg}). The conjugate of a closed, proper, strongly convex function is indeed smooth (see \textit{e.g.}~\cite{Hiriart1993}, chapter X).
 \end{proof}
 
\begin{lemma}[Bounding the value of~\ref{eq:regression}]
\label{lemma:obj_ineq}
Let us denote the optimum objectives of \ref{eq:regression}, \ref{eq:regression_mod} by $P_{\lambda}$, $P_{\lambda, \mu}$. If $\Omega$ is a norm, we have the following inequalities:
\begin{equation*}
    P_{\lambda} - \delta^* \leq P_{\lambda, \mu} \leq P_{\lambda},
\end{equation*}
with $\delta^*$ the value of $\delta$ at the optimum of $P_{\lambda}(t) - \delta(t)$.
\end{lemma}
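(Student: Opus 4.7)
The plan is to exploit the pointwise inequality $f_{\mu}(t) \leq f(t)$ in one direction, and to use feasibility of the modified problem's minimizer in the original problem for the other direction. Let $\delta(t) := f(t) - f_{\mu}(t)$; this is nonnegative and vanishes only on the ``flat'' region inherited from the infimum convolution.

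For the upper bound $P_{\lambda,\mu} \leq P_{\lambda}$, I would apply the representation $f_{\mu}(t) = \min_{z} f(z) + \mu \Omega^{*}\bigl(\tfrac{t-z}{\mu}\bigr)$ established in Section~\ref{sec:tools}. Since $\Omega$ is a norm, $\Omega^{*}$ is the indicator of the unit dual ball, and in particular $\Omega^{*}(0) = 0$. Choosing $z = t$ in the infimum yields $f_{\mu}(t) \leq f(t)$ for every $t$, hence the same inequality holds between the constrained infima defining $P_{\lambda,\mu}$ and $P_{\lambda}$.

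For the lower bound, I would pick an optimal pair $(x^{\star}_{\mu}, t^{\star}_{\mu})$ for \ref{eq:regression_mod}. Because the constraint $t = Ax - b$ is the same in both problems, $(x^{\star}_{\mu}, t^{\star}_{\mu})$ is feasible for \ref{eq:regression}, so
\begin{equation*}
   P_{\lambda} \;\leq\; f(t^{\star}_{\mu}) + \lambda R(x^{\star}_{\mu}) \;=\; f_{\mu}(t^{\star}_{\mu}) + \lambda R(x^{\star}_{\mu}) + \delta(t^{\star}_{\mu}) \;=\; P_{\lambda,\mu} + \delta(t^{\star}_{\mu}).
\end{equation*}
It then remains to identify $\delta(t^{\star}_{\mu})$ with $\delta^{\star}$. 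This is the one step that requires unpacking the statement's notation: interpreting $P_{\lambda}(t)$ as the original objective regarded as a function of the feasible variable, the constrained minimization of $P_{\lambda}(t) - \delta(t)$ reduces precisely to the modified problem $P_{\lambda,\mu}$, so its argmin in $t$ is $t^{\star}_{\mu}$ and hence $\delta^{\star} = \delta(t^{\star}_{\mu})$. Rearranging gives $P_{\lambda} - \delta^{\star} \leq P_{\lambda,\mu}$ and completes the proof.

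The only subtle point is this notational identification; otherwise both inequalities follow from one-line arguments using the definition of $f_{\mu}$ as an inf-convolution and the feasibility of minimizers across the two problems.
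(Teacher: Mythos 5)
Your proof is correct, but it proves a slightly different (and in fact tighter) version of the statement than the paper does, because you and the paper mean different things by $\delta$. You take $\delta(t) := f(t) - f_{\mu}(t)$, the exact pointwise gap, evaluate it at the minimizer $t^{\star}_{\mu}$ of the regularized problem, and observe that minimizing $P_{\lambda}(t)-\delta(t)$ is then tautologically the same as minimizing $P_{\lambda,\mu}$, which makes the identification $\delta^{\star}=\delta(t^{\star}_{\mu})$ immediate. The paper instead relies on an auxiliary appendix lemma (Lemma~\ref{lemma:bounding_f}) which defines $\delta(t) = \max_{\|u/\mu\|^{*}\leq 1} g^{\top}u = \mu\,\Omega(g)$ for a subgradient $g \in \partial f(t)$: writing $f_{\mu}(t)-f(t) = \min_{\|u/\mu\|^{*}\leq 1} f(t+u)-f(t)$ and invoking convexity gives the pointwise sandwich $f(t)-\delta(t) \leq f_{\mu}(t) \leq f(t)$, after which the lemma follows by taking constrained infima and evaluating $\delta$ at the minimizer of $P_{\lambda}(t)-\delta(t)$. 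Your upper bound $P_{\lambda,\mu}\leq P_{\lambda}$ is the same one-line argument in both treatments. The trade-off is this: your $\delta$ is pointwise no larger than the paper's (so your inequality is at least as sharp), but it is defined implicitly through $f_{\mu}$ itself, whereas the paper's subgradient-based $\delta$ is explicit and interpretable ($\mu$ times the norm of a subgradient), which is what makes the follow-up remark that $\delta(t)\to 0$ as $\mu\to 0$ transparent. Since the lemma statement never pins down $\delta$, your reading is defensible, but be aware that the quantity the authors intend is the one from Lemma~\ref{lemma:bounding_f}, not the exact gap.
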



The proof can be found in Appendix~\ref{sec:add_proofs}. When $\mu \to 0$, $\delta(t) \to 0$ hence the objectives can be arbitrarily close.


\subsection{Effect of Regularization and Examples}
\label{subsec:examples}
We start by recalling that the infimum convolution is traditionally used for smoothing an objective when~$\Omega$ is strongly convex, and then we discuss the use of sparsity-inducing regularization in the dual.
%
%

\paragraph{Euclidean distance to a closed convex set.} It is known that convolving the indicator function of a closed convex set $\mathcal{C}$ with a quadratic term $\Omega$ (the Fenchel conjugate of a quadratic term is itself) yields the euclidean distance to $\mathcal{C}$
\begin{align*}
    f_{\mu}(t) = & \underset{z \in \mathbb{R}^n}{\text{min}} I_{\mathcal{C}}(z) + \frac{1}{2\mu}\|t - z\|_2^2 
    =  \underset{z \in \mathcal{C}}{\text{min}} \frac{1}{2\mu}\|t - z\|_2^2.
\end{align*}

\paragraph{Huber loss.} The $\ell_1$-loss is more robust to outliers than the $\ell_2$-loss, but is not differentiable in zero which may induce difficulties during the optimization. A natural solution consists in smoothing it: \cite{huber} for example show that applying the Moreau-Yosida smoothing, \textit{i.e} convolving $|t|$ with a quadratic term $\frac{1}{2} t^2$ yields the well-known Huber loss, which is both smooth and robust:
\begin{equation*} f_{\mu}(t) = \begin{cases}
               \frac{t^2}{2 \mu} & \text{if } |t| \leq \mu, \\
              |t| - \frac{\mu}{2} & \text{otherwise}.
               \end{cases}
\end{equation*}

Now, we present examples where $\Omega$ has a sparsity-inducing effect.

\paragraph{Hinge loss.} Instead of the quadratic loss in the previous example, choose a robust loss $f \colon t \mapsto \|1-t\|_1$. By using the same function $\Omega$, we obtain the classical hinge loss of support vector machines
$$
 f_\mu(t) = \sum_{i=1}^n \frac{1}{2}[1- t_i - \mu, 0]_+.
$$
We see that the effect of convolving with the constraint $\mathbf{1}_{x \preceq
0}$ is to turn a regression loss (\eg, square loss) into a classification loss.
The effect of the $\ell_1$-norm is to encourage the loss to be flat (when $\mu$ grows, $[1- t_i - \mu, 0]_+$ is equal to zero for a larger range of values $t_i$), which corresponds to the sparsity-inducing effect in the dual that we will exploit for screening data points. The Squared Hinge loss is presented in Appendix~\ref{sec:add_examples}.

\paragraph{Screening-friendly regression.}
\label{ex:sreg}
Consider now the quadratic loss $f: t \mapsto {\|t\|^2}/{2}$ and $\Omega(x) = \|x\|_1$. Then $\Omega^*(y) = {\mathbf 1}_{\|y\|_\infty \leq 1}$ (see \textit{e.g.}~\cite{bach2012optimization}), and
\begin{equation}
   f_\mu(t) = \sum_{i=1}^n \frac{1}{2}[|t_i|-\mu]_+^2. \label{eq:sreg}
\end{equation}
A proof can be found in Appendix~\ref{sec:add_proofs}. As before, the parameter $\mu$ encourages the loss to be flat (it is exactly $0$ when $\|t\|_\infty \leq \mu$).

\paragraph{Screening-friendly logistic regression.} Let us now consider the logistic loss $f(t) = \log{(1 + e^{-t})}$, which we define only with one dimension for simplicity here. It is easy to show that the infimum convolution with the $\ell_1$-norm does not induce any sparsity in the dual, because the dual of the logistic loss has unbounded gradients, making classical sparsity-inducing penalties ineffective.  However, 
we may consider instead another penalty to fix this issue: $\Omega(x) = - x \log{(-x)} + \mu |x|$ for $x \in [-1,0]$. We have $\Omega^*(y) = - e^{y + \mu - 1}$. Convolving $\Omega^*$ with~$f$ yields
\begin{equation}
    f_{\mu}(x) = \begin{cases} 
    e^{x + \mu - 1} - (x + \mu) & \: \text{if} \: x + \mu - 1 \leq 0, \\
    0 & \: \text{otherwise}.\label{eq:sclass}
    \end{cases}
\end{equation}
Note that this loss is asymptotically robust. Moreover, the entropic part of $\Omega$ makes this penalty strongly convex hence $f_{\mu}$ is smooth~\citep{Nest03}. Finally, the $\ell_1$ penalty ensures that the dual is sparse thus making the screening usable. Our regularization mechanism thus builds a smooth, robust classification loss akin to the logistic loss on which we can use screening rules. If $\mu$ is well chosen, the safe logistic loss maximizes the log-likelihood of the data for a probabilistic model which slightly differs from the sigmoid in vanilla logistic regression. The effect of regularization parameter in a few previous cases are illustrated in Figure~\ref{fig:curves}.

\begin{figure}
\centering
\begin{minipage}{0.45\linewidth}
\includegraphics[width=\linewidth]{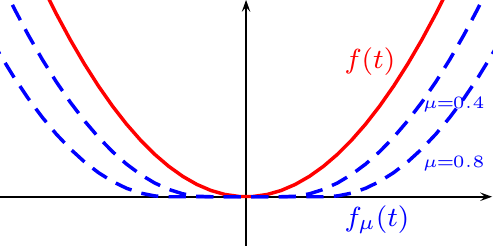}
\end{minipage} 
\begin{minipage}{0.45\linewidth}
\includegraphics[width=\linewidth]{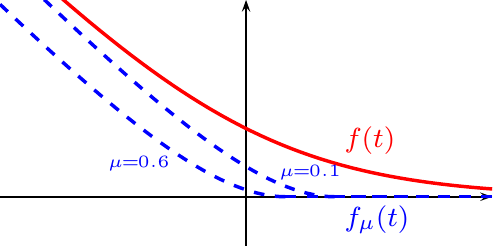}
\end{minipage}
\caption{Effect of the dual sparsity-inducing regularization on the quadratic loss~(\ref{eq:sreg}) (left) and logistic loss~(\ref{eq:sclass}) (right). After regularization, the loss functions have flat areas. Note that both of them are smooth.}\label{fig:curves}
\end{figure}

In summary, regularizing the dual with the $\ell_1$ norm induces a flat region in the loss, which induces sparsity in the dual. The geometry is preserved elsewhere. Note that we do not suggest to use~\ref{eq:regression_mod} and~\ref{p_classif} to screen for~\ref{eq:regression} and~\ref{eq:classification}.
\section{EXPERIMENTS}
\label{sec:experiments}
We now present  experimental results demonstrating the effectiveness of the data screening procedure. 

\paragraph{Datasets.}
We consider three real datasets, SVHN, MNIST, RCV-1, and a synthetic one.
MNIST ($n=60000$) and SVHN ($n=604388$) both represent digits, which we encode by using the output of a two-layer convolutional kernel network \citep{mairal2016end} leading to feature dimensions $p=2304$. RCV-1 ($n=781265$) represents sparse TF-IDF vectors of categorized newswire stories ($p=47236$). For classification, we consider a binary problem consisting of discriminating digit 9 for MNIST vs. all other digits (resp. digit 1 vs rest for SVHN, 1st category vs rest for RCV-1).
For regression, we also consider a synthetic dataset, where data is generated by  
$b = Ax + \epsilon$,
where $x$ is a random, sparse ground truth, $A \in \mathbb{R}^{n \times p}$ a data matrix whith coefficients in $[-1,1]$ and $\epsilon \sim \mathcal{N}(0, \sigma)$ with $\sigma = 0.01$. Implementation details are provided in Appendix. We fit usual models using Scikit-learn~\citep{scikit-learn} and Cyanure~\citep{Mairal2019Cyan} for large-scale datasets.

\vspace*{-0.2cm}
\subsection{Safe Screening}

Here, we consider problems that naturally admit a sparse dual solution, which allows safe screening.

\vspace*{-0.2cm}
\paragraph{Interval regression.} We first illustrate the practical use of the screening-friendly regression loss~\eqref{eq:sreg} derived above. It corresponds indeed to a particular case of a supervised learning task called interval regression \citep{hocking}, which is widely used in fields such as economics. In interval regression, one does not have scalar labels but intervals $\mathcal{S}_i$ containing the true labels $\Tilde{b}_i$, which are unknown. The loss is written
\begin{equation}
    \ell(x) = \sum_{i=1}^n \underset{b_i \in \mathcal{S}_i}{\text{inf}}(a_i^\top x - b_i)^2,
    \label{eq:general_ir}
\end{equation}
where $\mathcal{S}_i$ contains the true label $\Tilde{b}_i$. For a given data point, the model only needs to predict a value inside the interval in order not to be penalized. When the intervals $\mathcal{S}_i$ have the same width and we are given their centers $b_i$, ~\eqref{eq:general_ir} is exactly~\eqref{eq:sreg}. Since~\eqref{eq:sreg} yields a sparse dual, we can apply our rules to safely discard intervals that are assured to be matched by the optimal solution. We use an $\ell_1$ penalty along with the loss. As an illustration, the experiment was done using a toy synthetic dataset $(n = 20, p = 2)$, the signal to recover being generated by one feature only. The intervals can be visualized in Figure~\ref{fig:ir}. The ``difficult'' intervals (red) were kept in the training set. The predictions hardly fit these intervals. The ``easy'' intervals (blue) were discarded from the training set: the safe rules certify that the optimal solution will  fit these intervals. Our screening algorithm was run for 20 iterations of the Ellipsoid method. Most intervals can be ruled out afterwards while the remaining ones yield the same optimal solution as a model trained on all the intervals. 

\begin{figure}
\centering
\includegraphics[width=0.82\linewidth]{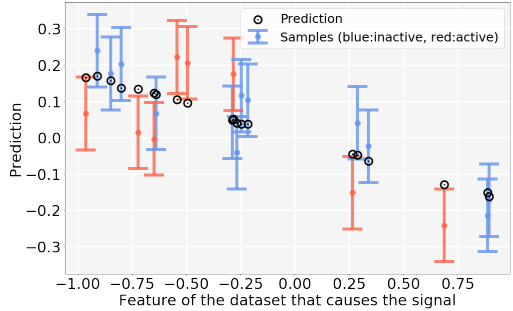}
\captionof{figure}{Safe interval regression on synthetic dataset. Most ``easy'' samples (in blue) can be discarded while the ``difficult'' ones (in red) are kept.}
\label{fig:ir}
\vspace*{-0.65cm}
\end{figure}

\vspace*{-0.2cm}
\paragraph{Classification.} 
  Common sample screening methods such as~\cite{double_screening} require a strongly convex objective. When it is not the case, there is, to the best of our knowledge, no baseline for this case. Thus, when considering classification using the non strongly convex safe logistic loss derived in Section~\ref{sec:theory} along with an $\ell_1$ penalty, our algorithm is still able to screen samples, as shown in Table~\ref{table:screened_frac_l1}. The algorithm is initialized using an approximate solution to the problem, and the radius of the initial ball is chosen depending on the number of epochs ($100$ for $10$ epochs, $10$ for $20$ and $1$ for $30$ epochs), which is valid in practice.
\begin{table*}
\vspace*{-0.2cm}
\small
\centering
\begin{tabular}{ | l | c | c | c | c | c | r | }
\hline
Epochs & \multicolumn{3}{|c|}{20} & \multicolumn{3}{|c|}{30} \\ \hline
\hline
$\lambda$ & MNIST & SVHN & RCV-1 & MNIST & SVHN & RCV-1 \\ \hline
$10^{-3}$ & 0 & 0 & 1 & 0 & 2 & 12 \\ 
$10^{-4}$ & 0.3 & 0.01 & 8 & 27 & 17 & 42 \\ 
$10^{-5}$ & 35 & 12 & 45 & 65 & 54 & 75 \\ \hline 
\end{tabular}
\caption{Percentage of samples screened (\textit{i.e} that can be thrown away) in an $\ell_1$ penalized Safe Logistic loss given the epochs made at initialization. The radius is initialized respectively at $10$ and $1$ for MNIST and SVHN at Epochs $20$ and $30$, and at $1$ and $0.1$ for RCV-1.}
\label{table:screened_frac_l1}
\vspace*{-0.2cm}
\end{table*}
\begin{table}
\vspace*{-0.2cm}
\small
\centering
\begin{tabular}{ | l | c | c | c | r | }
\hline
Epochs & \multicolumn{2}{|c|}{20} & \multicolumn{2}{|c|}{30} \\ \hline
\hline
$\lambda$ & MNIST & SVHN & MNIST & SVHN \\ \hline
$1.0 $ & 89 / 89 & 87 / 87 & 89 / 89 & 87 / 87\\ 
$10^{-1}$ & 95 / 95 & 11 / 47 & 95 / 95 & 91 / 91\\
$10^{-2}$ & 16 / 84 & 0 / 0 & 98 / 98 & 90 / 92\\ 
$10^{-3}$ & 0 / 0 & 0 / 0 & 34 / 50 & 0 / 0 \\ \hline
\end{tabular}
\caption{Percentage of samples screened in an $\ell_2$ penalized SVM with Squared Hinge loss (Ellipsoid (ours) / Duality Gap) given the epochs made at initialization.}
\label{table:screened_frac_strongly_convex}
\vspace*{-0.2cm}
\end{table}
The Squared Hinge loss allows for safe screening (see~\ref{lemma:margin_sparsity}). Combined with an $\ell_2$ penalty, the resulting ERM is strongly convex. We can therefore compare our Ellipsoid algorithm to the baseline introduced by~\cite{double_screening}, where the safe region is a ball centered in the current iterate of the solution and whose radius is  $\frac{2\Delta}{\lambda}$ with $\Delta$ a duality gap of the ERM problem. Both methods are initialized by running the default solver of scikit-learn with a certain number of epochs. The resulting approximate solution and duality gap are subsequently fed into our algorithm for initialization. Then, we perform one more epoch of the duality gap screening algorithm on the one hand, and the corresponding number of ellipsoid steps computed on a subset of the dataset on the other hand, so as to get a fair comparison in terms of data access. The results can be seen in Table~\ref{table:screened_frac_strongly_convex}. While being more general (our approach is neither restricted to classification, nor requires strong convexity), our method performs similarly to the baseline. Figure~\ref{fig:tradeoff} highlights the trade-off between optimizing and evaluating the gap (Duality Gap Screening) versus performing one step of Ellipsoid Screening. Both methods start screening after a correct iterate (i.e. with good test accuracy) is obtained by the solver (blue curve) thus 
suggesting that screening methods would rather be of practical use when computing a regularization path, or when the computing budget is less constrained (e.g. tracking or anomaly detection) which is the object of next paragraph.

\begin{figure}
\centering
\vspace*{-0.15cm}
\includegraphics[width=0.95\linewidth]{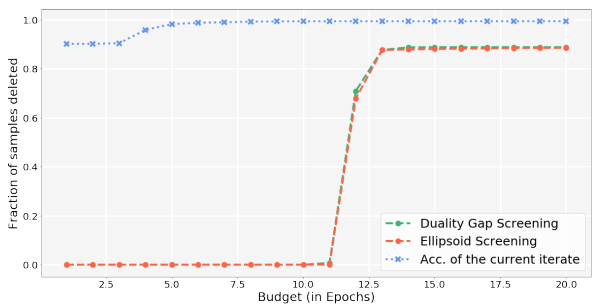}
\captionof{figure}{Fraction of samples screened vs Epochs done for two screening strategies along with test accuracy of the current iterate (Sq. Hinge + $\ell_2$ trained on MNIST).}
\label{fig:tradeoff}
\vspace*{-0.2cm}
\end{figure}

\vspace*{-0.2cm}
\paragraph{Computational gains} As demonstrated in Figure~\ref{fig:comp_gains}, computational gains can indeed be obtained in a regularization path setting (MNIST features, Squared Hinge Loss and L2 penalty). Each point of both curves represents an estimator fitted for a given lambda against the corresponding cost (in epochs). Each estimator is initialized with the solution to the previous parameter lambda. On the orange curve, the previous solution is also used to initialize a screening. In this case, the estimator is fit on the remaining samples which further accelerates the path computation.
\begin{figure}
\centering
\vspace*{-0.5cm}
\includegraphics[width=0.95\linewidth]{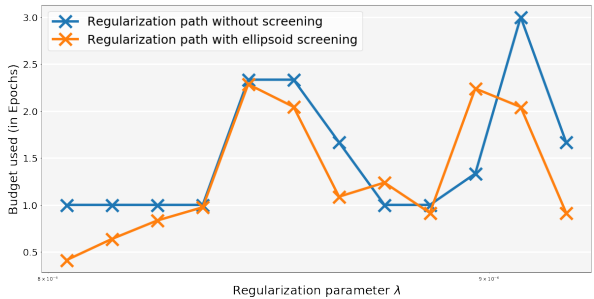}
\captionof{figure}{Regularization path of a Squared Hinge SVM trained on MNIST. The screening enables computational gains compared to a classical regularization path.}
\label{fig:comp_gains}
\vspace*{-0.5cm}
\end{figure}
\vspace*{-0.3cm}
\subsection{Dataset Compression}
We now consider the problem of dataset compression, where the goal is to maintain a good accuracy while using less examples from a dataset. This section should be seen as a proof of concept. A natural scheme consists in choosing the samples that have a higher margin since those will carry more information than samples that are easy to fit. In this setting, our screening algorithm can be used for compression by using the scores of the screening test as a way of ranking the samples. In our experiments, and for a given model, we progressively delete data points according to their score in the screening test for this model, before fitting the model on the remaining subsets. We compare those methods to random deletions in the dataset and to deletions based on the sample margin computed on early approximations of the solution when the loss admits a flat area (``margin screening''). Our compression scheme is valid for classification as can be seen in Figure~\ref{fig:compression_classif} and regression (see Appendix~\ref{sec:add_results}).

\begin{figure}
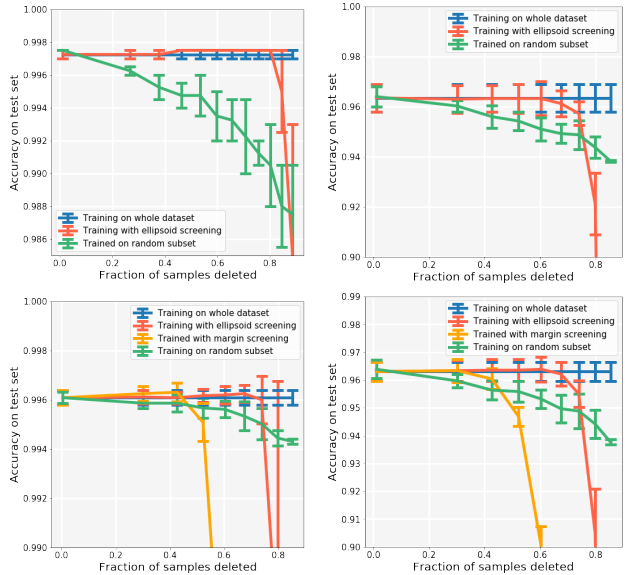

\vspace*{-0.5cm}
  \begin{subfigure}{.235\textwidth}
    \centering
    \includegraphics[width=\linewidth]{compression_mnist_safelog.pdf}
  \end{subfigure} \hfill
  \begin{subfigure}{.235\textwidth}
    \centering
    \includegraphics[width=\linewidth]{compression_svhn_safelog.pdf}
  \end{subfigure}
  
  \begin{subfigure}[t]{.235\textwidth}
    \centering
    \includegraphics[width=\linewidth]{compression_mnist_squared_hinge.pdf}
  \end{subfigure} \hfill
  \begin{subfigure}[t]{.235\textwidth}
    \centering
    \includegraphics[width=\linewidth]{compression_svhn_squared_hinge.pdf}
  \end{subfigure}
\caption{Dataset compression in classification. Up: $\ell_1$ Safe Logistic. Down: $\ell_2$ Sq. Hinge. Left: MNIST. Right: SVHN.}
  \label{fig:compression_classif}
\vspace*{-0.4cm}
\end{figure}

\paragraph{Discussion.} For all methods, the degradation in performance is lesser than with random deletions. Nevertheless, in the regime where most samples are deleted (beyond $80\%$), random deletions tend to do better. This is not surprising since the screening deletes the samples that are ``easy'' to classify. 
Then, only the difficult ones and outliers remain, making the prediction task harder compared to a random subsampling.

\clearpage

\section*{Acknowledgments}
\label{sec:acknowledgments}
JM and GM were supported by the ERC grant number 714381 (SOLARIS project) and by ANR 3IA MIAI@Grenoble Alpes, (ANR-19-P3IA-0003). AA would like to acknowledge support from the {\em ML and Optimisation} joint research initiative with the {\em fonds AXA pour la recherche} and Kamet Ventures, a Google focused award, as well as funding by the French government under management of Agence Nationale de la Recherche as part of the ``Investissements d'avenir'' program, reference ANR-19-P3IA-0001 (PRAIRIE 3IA Institute). GM thanks Vivien Cabannes, Yana Hasson and Robin Strudel for useful discussions. All the authors thank the reviewers for their useful comments.

\bibliographystyle{plainnat}
\bibliography{biblio,MainPerso}

\clearpage

\appendix

\section{Proofs.}
\label{sec:add_proofs}

%

\subsection{Proof of Lemma~\ref{lemma:margin}}
\begin{proof}
At the optimum,
\begin{align*}
    P(x^*) - D(\nu^*) ={} \frac{1}{n} \sum_{i=1}^n f_i(a_i^\top x) + f_i^*(\nu_i) + \\ \lambda R(x) + \lambda R^*\left(-\frac{A^T \nu}{\lambda n}\right) = 0.
\end{align*}
Adding the null term $\langle x, - \frac{A^\top \nu}{n} \rangle - \langle x, - \frac{A^\top \nu}{n} \rangle$ gives
\begin{align*}
    \frac{1}{n} \sum_{i=1}^n \underbrace{f_i(a_i^\top x) + f_i^*(\nu_i) - a_i^\top x \nu_i}_{\geq 0} + \\ \lambda \underbrace{ \left( R(x) + R^*\left(-\frac{A^\top \nu}{\lambda n}\right) - \left\langle x, - \frac{A^\top \nu}{\lambda n} \right\rangle \right)}_{\geq 0} = 0,
\end{align*}
since Fenchel-Young's inequality states that each term is greater or equal to zero. We have a null sum of non-negative terms; hence, each one of them is equal to zero. We therefore have for each $i = 1 \dots n$:
\begin{equation*}
    f(a_i^\top x) + f^*(\nu_i) =  a_i^\top x \nu_i,
\end{equation*}
which corresponds to the equality case in Fenchel-Young's relation, which is equivalent to $\nu^*_i \in \partial f_i(a_i^\top x^*)$. 
\end{proof}

\subsection{Proof of Lemma~\ref{lemma:test}}
\begin{proof}
\label{closed_form_optim}
 The Lagrangian of the problem writes:
\begin{align*}
    L(x, \nu, \gamma) = a_i^\top x - b_i + \nu \left( 1 - (x - z)^TE^{-1}(x - z) \right) - \\ \gamma g^T(x - z),
\end{align*}
with $\nu, \gamma \geq 0$. When maximizing in $x$, we get:
\begin{align*}
    \frac{\partial L}{\partial x} & = a_i + 2 \nu (E^{-1}z - E^{-1}x) - \gamma = 0.
\end{align*}
We have $\nu > 0$ since the opposite leads to a contradiction. This yields $x = z + \frac{1}{2 \nu}(Ea_i - \gamma Eg)$ and $(x - z)^T E^{-1} (x - z) = 1$ at the optimum which gives $\nu = \frac{1}{2}\sqrt{(a_i - \gamma)^T E (a_i - \gamma)}$. 

Now, we have to minimize
\begin{align*}
   g(\nu, \gamma) = a_i\left(z + \frac{1}{2\nu}(Ea_i - \gamma Eg)\right) - \\ \gamma^\top\left(\frac{1}{2\nu}(Ea_i - \gamma Eg)\right). 
\end{align*}
To do that, we consider the optimality condition
\begin{align*}
    \frac{\partial g}{\partial \gamma} & = - \frac{1}{2\nu}a_iEg - \frac{1}{2\nu}g^TEa_i + \frac{\gamma}{\nu} g^TEg = 0,
\end{align*}
which yields $\gamma = \frac{g^TEa_i}{g^TEg}$. If $g^TEa_i < 0$ then $\gamma = 0$ in order to avoid a contradiction.

In summary, either $g^TEa_i \leq 0$ hence the maximum is attained in $x = z + \frac{1}{2\nu}Ea_i$ and is equal to $a_iz + \sqrt{a_i^T E a_i} - y_i$, or $g^TEa_i > 0$ and the maximum is attained in $x = z + \frac{1}{2\nu}E(a_i - \gamma Eg)$ and is equal to $a_i\left(z + \frac{1}{2 \nu}E(a_i - \gamma g)\right) - b_i$ with $\nu = \frac{1}{2}\sqrt{(a_i - \gamma)^T E (a_i - \gamma)}$ and $\gamma = \frac{g^TEa_i}{g^TEg}$.
\end{proof}

\subsection{Proof of Lemma~\ref{lemma:reg}}

\begin{proof} We can write \ref{eq:regression_mod} as
\BEQ
\BA{ll}
\mbox{minimize} & \Tilde{f}(\Tilde{x}) + \lambda \Tilde{R}(\Tilde{x})\\
\mbox{subject to} & \Tilde{A}\Tilde{x} = - b
\EA\EEQ
   in the variable $\Tilde{x} = (t,x) \in \reals^{n + p}$ with $\Tilde{f} \colon \Tilde{x} \mapsto f_{\mu}(t) $ and $\Tilde{R} \colon \Tilde{x} \mapsto R(x)$ and $\Tilde{A} \in \mathbb{R}^{n \times (n+p)} = \left( \text{Id} , - A \right)$. Since the constraints are linear, we can directly express the dual of this problem in terms of the Fenchel conjugate of the objective (see \textit{e.g.} \cite{boyd_van}, 5.1.6). Let us note $f_0 = \Tilde{f} + \lambda \Tilde{R}$. For all $y \in \mathbb{R}^{n+p}$, we have
\begin{align*}
    f_0^*(y) & = \underset{x \in \mathbb{R}^{n+p}}{\text{sup}} \langle x, y \rangle - \Tilde{f}(x) - \lambda \Tilde{R}(x) \\
    & = \underset{x_1 \in \mathbb{R}^{n}, x_2 \in \mathbb{R}^p}{\text{sup}}
    \langle x_1, y_1 \rangle + \langle x_2, y_2 \rangle - f(x_1) - \lambda R(x_2) \\
    & = f_{\mu}^*(y_1) + \lambda R^*\left(\frac{y_2}{\lambda}\right).
\end{align*}
It is known from~\cite{huber} that $f_{\mu} = f ~\square~ \Omega^*_{\mu} = (f^* + \Omega_{\mu}^{**})^*$ with $\Omega_{\mu}^* = \mu \Omega^*(\frac{.}{\mu})$. Clearly, $\Omega_{\mu}^{**} = \mu \Omega$. If $\Omega$ is proper, convex and lower semicontinuous, then $\Omega = \Omega^{**}$ . As a consequence, $f_{\mu}^* = (f^* + \mu \Omega)^{**}$. If $f^* + \mu \Omega$ is proper, convex and lower semicontinuous, then $f_{\mu}^* = f^* + \mu \Omega$, hence
\begin{equation*}
    f_0^*(y) = f^*(y_1) + \lambda R^*\left(\frac{y_2}{\lambda}\right) + \mu \Omega(y_1).
\end{equation*}
Now we can form the dual of \ref{eq:regression_mod} by writing 
\BEQ
\BA{ll}
\mbox{maximize} & - \langle - b, \nu \rangle - f_0^*(-\Tilde{A}^T\nu)
\EA\EEQ
in the variable $\nu \in \mathbb{R}^n$. Since $-\Tilde{A}^T \nu = (-\nu, A^T \nu)$ with $\nu \in \mathbb{R}^n$ the dual variable associated to the equality constraints,
\[
    f_0^*(-\Tilde{A}^T \nu) = f^*(-\nu) + \lambda R^*\left(\frac{A^T \nu}{\lambda}\right) + \mu \Omega(-\nu).
\]
Injecting $f_0^*$ in the problem and setting $\nu$ instead of $-\nu$ (we optimize in $\mathbb{R}$) concludes the proof.
\end{proof}

\subsection{Lemma~\ref{lemma:bounding_f}}

 \begin{lemma}[Bounding $f_{\mu}$]
      If $\mu \geq 0$ and $\Omega$ is a norm then
      \begin{equation*}
          f(t) - \delta(t) \leq f_{\mu}(t) \leq f(t),\quad \mbox{for all $t \in \mathrm{dom} f$}
      \end{equation*}
 with  $\delta(t) = \underset{\|\frac{u}{\mu}\|^* \leq 1}{\max} g^Tu$ and $g \in \partial f(t)$.
 \label{lemma:bounding_f}
 \end{lemma}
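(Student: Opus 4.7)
The plan is to prove the two inequalities separately, using the infimum-convolution formula
\[
f_{\mu}(t) = \min_{z \in \Real^n} f(z) + \mu \Omega^*\!\left(\frac{t-z}{\mu}\right)
\]
derived in Section~\ref{sec:tools}, together with the standard fact that the Fenchel conjugate of a norm $\Omega$ is the indicator of the unit ball of its dual norm $\|\cdot\|^*$.

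For the \emph{upper bound}, I would simply evaluate the inf-convolution at the feasible point $z = t$. This gives $f_\mu(t) \leq f(t) + \mu \Omega^*(0)$, and since $\Omega$ is a norm we have $\Omega^*(0) = 0$, yielding $f_\mu(t) \leq f(t)$. This is the easy half.

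For the \emph{lower bound}, the idea is to linearize $f$ at $t$. Fix $g \in \partial f(t)$; convexity of $f$ gives $f(z) \geq f(t) + g^\top (z - t)$ for every $z$, so
\[
f_\mu(t) \geq f(t) + \min_{z} \left\{ g^\top(z - t) + \mu \Omega^*\!\left(\frac{t-z}{\mu}\right)\right\}.
\]
After the change of variable $u = t - z$, the inner problem becomes
\[
\min_{u}\left\{ -g^\top u + \mu \Omega^*\!\left(\frac{u}{\mu}\right)\right\} = -\max_{u}\left\{ g^\top u - \mu \Omega^*\!\left(\frac{u}{\mu}\right)\right\}.
\]
Because $\Omega$ is a norm, $\Omega^*(\cdot) = I_{\{\|\cdot\|^* \leq 1\}}$, and therefore $\mu\, \Omega^*(u/\mu)$ is $0$ when $\|u/\mu\|^* \leq 1$ and $+\infty$ otherwise. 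The maximum on the right therefore reduces to $\max_{\|u/\mu\|^* \leq 1} g^\top u = \delta(t)$, and we conclude $f_\mu(t) \geq f(t) - \delta(t)$.

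No step looks like a serious obstacle: the only place one must be careful is verifying that for $\mu \geq 0$ the identification $\mu\,\Omega^*(u/\mu) = I_{\{\|u/\mu\|^* \leq 1\}}$ is used correctly (in particular handling the degenerate case $\mu = 0$, which forces $u = 0$ and makes both inequalities collapse to equalities). Once that is in place, the lemma is an immediate consequence of the inf-convolution representation of $f_\mu$ and the subgradient inequality for $f$.
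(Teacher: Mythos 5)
Your proof is correct and follows essentially the same route as the paper: the upper bound by evaluating the inf-convolution at $z=t$ using $\Omega^*(0)=0$, and the lower bound by combining the subgradient inequality at $t$ with the identification of $\mu\,\Omega^*(\cdot/\mu)$ as the indicator of the dual-norm ball of radius $\mu$ (the paper writes the resulting bound as a $\min$ over that symmetric ball, which equals your $-\delta(t)$). Your extra remark on the degenerate case $\mu=0$ is a small refinement the paper skips, but it does not change the argument.
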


\begin{proof}
    If $\Omega$ is a norm, then $\Omega(0)=0$ and $\Omega^*$ is the indicator function of the dual norm of $\Omega$ hence non-negative. Moreover, if $\mu > 0$ then, $\forall z \in \text{dom}f$ and $\forall t \in \mathbb{R}^n$,
    \begin{equation*}
        f_{\mu}(t) \leq f(z) + \mu \Omega^*\left(\frac{t - z}{\mu}\right).
    \end{equation*}
    In particular, we can take $t = z$ hence the right-hand inequality. On the other hand,
    \begin{align*}
        f_{\mu}(t) - f(t) &= \underset{z}{\min} f(z) + \mu I_{\|\frac{z - t}{\mu}\|^* \leq 1} - f(t)\\
        & = \underset{\|\frac{u}{\mu}\|^* \leq 1}{\min} f(t + u) - f(t).
    \end{align*}
    Since $f$ is convex,
    \begin{equation*}
        f(t + u) - f(t) \geq g^Tu \text{ with } g \in \partial f(t).
    \end{equation*}
    As a consequence, 
    \begin{equation*}
        f_{\mu}(t) - f(t) \geq \underset{\|\frac{u}{\mu}\|^* \leq 1}{\min} g^Tu.
    \end{equation*}
\end{proof}

\subsection{Proof of Lemma~\ref{lemma:obj_ineq}}

\begin{proof}
   The proof is trivial given the inequalities in Lemma~\ref{lemma:bounding_f}.
\end{proof}

\subsection{Proof of Screening-friendly regression}

\begin{proof}   The Fenchel conjugate of a norm is the indicator function of the unit ball of its dual norm, the $\ell_\infty$ ball here. Hence the infimum convolution to solve
\begin{equation} \label{eq:lasso_loss_modified}
    f_{\mu}(x) = \underset{z \in \mathbb{R}^n}{\text{min }} \{f(z) + \mathbf{1}_{\|x - z\|_{\infty} \leq \mu}\}
\end{equation}
Since $f(x) = \frac{1}{2n} \|x\|_2^2$,
\begin{equation*}
    f_{\mu}(x) = \underset{z \in \mathbb{R}^{n}}{\text{min }} \frac{1}{2n} z^Tz + \mathbf{1}_{\|x - z\|_{\infty} \leq \mu}.
\end{equation*}
If we consider the change of variable $t = x - z$, we get:
\begin{equation*}
    f_{\mu}(x) = \underset{t \in \mathbb{R}^n}{\text{min }} \frac{1}{2n} \|x - t\|_2^2 + \mathbf{1}_{\|t\|_{\infty} \leq \mu}.
\end{equation*}
The solution $t^*$ to this problem is exactly the proximal operator for the indicator function of the infinity ball applied to $x$. It has a closed form
\begin{align*}
t^* & = \text{prox}_{\mathbf{1}_{\|.\|_{\infty} \leq \mu}}(x) \\
    & = x - \text{prox}_{\left(\mathbf{1}_{\|.\|_{\infty} \leq \mu}\right)^*}(x),
\end{align*}
using Moreau decomposition. We therefore have
\begin{align*}
t^* & = x - \text{prox}_{\mu \|.\|_1}(x). 
\end{align*}
Hence,
\[
    f_{\mu}(x) = \frac{1}{2n} \| x - t^*\|_2^2 = \frac{1}{2n} \| \text{prox}_ {\mu \|.\|_1}(x)\|_2^2.
\]
But, $\text{prox}_ {\mu \|.\|_1}(t) = \text{sgn}(t) \times [|t| - \mu]_+$ for $t \in \mathbb{R}$, where $[x]_+ = \text{max}(x, 0)$. 
\end{proof}

\section{Additional examples.}
\label{sec:add_examples}
\paragraph{Squared hinge loss.} Let us consider a problem with a quadratic loss $f \colon t \mapsto \| 1-t\|_2^2/2$ designed for a classification problem, and consider
$\Omega(x)= \|x\|_1 + \mathbf{1}_{x \preceq 0}$. We have $\Omega^*(y) = \mathbf{1}_{y \succeq -1}$, and 
\begin{align*}
    f_{\mu}(t) = & 
     \sum_{i=1}^n [1- t_i - \mu, 0]_+^2,
\end{align*}
which is a squared Hinge Loss with a threshold parameter~$\mu$ and $[.]_+ = \max(0,.)$.
 
\section{Additional experimental results.}
\label{sec:add_results}
\paragraph{Reproducibility.}  The data sets did not require any pre-processing except \emph{MNIST} and \emph{SVHN} on which exhaustive details can be found in \cite{mairal2016end}. For both regression and classification, the examples were allocated to train and test sets using scikit-learn's \textit{train-test-split} ($80\%$ of the data allocated to the train set). The experiments were run three to ten times (depending on the cost of the computations) and our error bars reflect the standard deviation. For each fraction of points deleted, we fit three to five estimators on the screened dataset and the random subset before averaging the corresponding scores. The optimal parameters for the linear models were found using a simple grid-search. 

\paragraph{Accuracy of our safe logistic loss.} 

The accuracies of the Safe Logistic loss we build is similar to the accuracies obtained with the Squared Hinge and the Logistic losses on the datasets we use in this paper thus making it a realistic loss function.

\begin{table*}
\small
\centering
\begin{tabular}{ | l | c | c | c |  }
\hline
Dataset & MNIST & SVHN & RCV-1 \\ \hline
\hline
Logistic + $\ell_1$ & 0.997 (0.01) & 0.99 (0.0003) & 0.975 (1.0) \\
Logistic + $\ell_2$ & 0.997 (0.001) & 0.99 (0.0003) & 0.975 (1.0) \\
Safelog + $\ell_1$ & 0.996 (0.0) & 0.989 (0.0) & 0.974 (1e-05) \\
Safelog + $\ell_2$ & 0.996 (0.0) & 0.989 (0.0) & 0.975 (1e-05) \\
Squared Hinge + $\ell_1$ & 0.997 (0.03) & 0.99 (0.03) & 0.975 (1.0) \\
Squared Hinge + $\ell_2$ & 0.997 (0.003) & 0.99 (0.003) & 0.974 (1.0) \\
\hline
\end{tabular}
\caption{Averaged best accuracies on test set (best $\lambda$ in a Logarithmic grid from $\lambda=0.00001$ to $1.0$).}
\label{table:accuracies}
\end{table*}

\paragraph{RCV-1.} Table~\ref{table:safe_sqhinge_rcv1} shows additional screening results on RCV-1 with a $\ell_2$ penalized Squared Hinge loss SVM.

\begin{table}[H]
\small
\centering
\begin{tabular}{ | l | c | r | }
\hline
Epochs & 10 & 20\\ \hline
\hline
$\lambda = 1$ & 7 / 84 & 85 / 85 \\ 
$\lambda = 10$ & 80 / 80 & 80 / 80 \\ 
$\lambda = 100$ & 68 / 68 & 68 / 68 \\ \hline 
\end{tabular}
\caption{RCV-1 : Percentage of samples screened in an $\ell_2$ penalized SVM with Squared Hinge loss (Ellipsoid (ours) /
Duality Gap) given the epochs made at initialization.}
\label{table:safe_sqhinge_rcv1}
\end{table}

\paragraph{Lasso regression.} The Lasso objective combines an $\ell_2$ loss with an $\ell_1$ penalty.
Since its dual is not sparse, we will instead apply the safe rules offered by the screening-friendly regression loss~\eqref{eq:sreg} derived in Section \ref{subsec:examples} and illustrated in~Figure~\ref{fig:curves}, combined with an $\ell_1$ penalty.
We can draw an interesting parallel with the SVM, which is naturally sparse in data points. At the optimum, the solution of the SVM can be expressed in terms of data points (the so-called support vectors) that are close to the classification boundary, that is the points that are \textit{the most difficult} to classify. Our screening rule yields the analog for regression: the points that are easy to predict, i.e. that are close to the regression curve, are less informative than the points that are harder to predict. 
In our experiments on \emph{synthetic data} ($n=100$), this does consistently better than random subsampling as can be seen in Figure~\ref{fig:synthetic_compression}.  

\begin{figure}
\centering
  \includegraphics[width=0.9\linewidth]{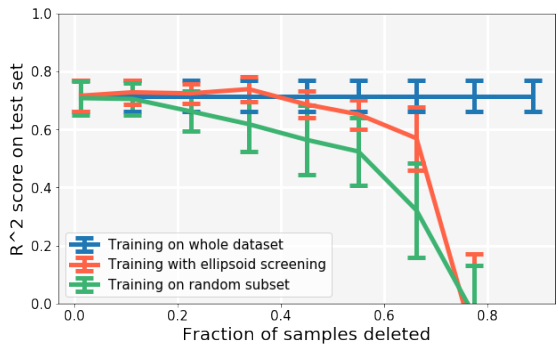}
  \captionof{figure}{Dataset compression for the Lasso trained on a synthetic dataset. The scores given by the screening yield a ranking that is better than random subsampling.}
  \label{fig:synthetic_compression}
\end{figure}

\begin{figure}
\centering
  \begin{subfigure}{.4\textwidth}
    \centering
    \includegraphics[width=\linewidth]{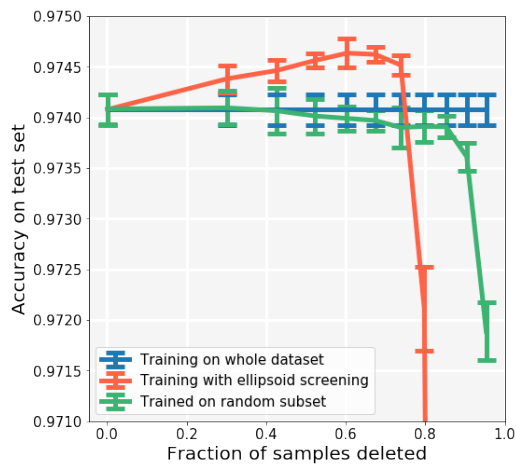}
    \caption{RCV-1 and $\ell_1$ Safe Logistic}
  \end{subfigure}
    
  \begin{subfigure}{.4\textwidth}
    \centering
    \includegraphics[width=\linewidth]{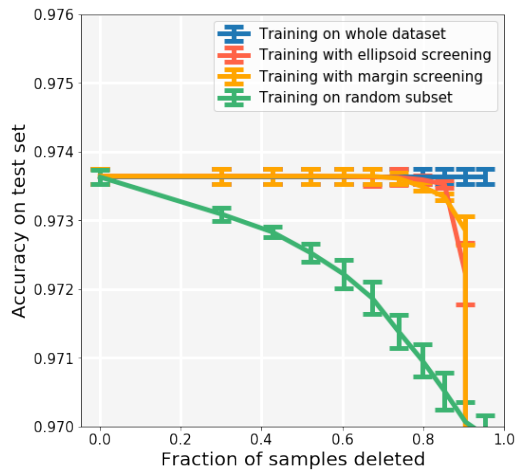}
    \caption{RCV-1 and $\ell_2$ Squared Hinge}
  \end{subfigure}
\caption{Dataset compression in classification.}
  \label{fig:compression_classif_hinge}
\end{figure}

%
%

\end{document}